\newtheorem{thm}{Theorem}
\newtheorem{lem}[thm]{Lemma}
\begin{document}

\title{ Zero-Shot Link Prediction in Knowledge Graphs with Large Language Models\\

% {\footnotesize \textsuperscript{*}Note: Sub-titles are not captured in Xplore and
% should not be used}
% \thanks{Identify applicable funding agency here. If none, delete this.}
}
\author{Mingchen Li\textsuperscript{\normalfont 1}, Chen Ling\textsuperscript{\normalfont 2}, Rui Zhang\textsuperscript{\normalfont 1*}\thanks{* Corresponding author}, Liang Zhao\textsuperscript{\normalfont 2*}\thanks{* Corresponding author} \\ \textsuperscript{1}Division of Computational Health Sciences, Department of Surgery, University of Minnesota Twin Cities, USA \\ \textsuperscript{2}Department of Computer Science, Emory University, USA\\ \textsuperscript{1}\{li003378,zhan1386\}@umn.edu,  \textsuperscript{2}\{chen.ling, liang.zhao\}@emory.edu}

% \author{\IEEEauthorblockN{1\textsuperscript{st} Given Name Surname}
% \IEEEauthorblockA{\textit{dept. name of organization (of Aff.)} \\
% \textit{name of organization (of Aff.)}\\
% City, Country \\
% email address or ORCID}
% \and
% \IEEEauthorblockN{2\textsuperscript{nd} Given Name Surname}
% \IEEEauthorblockA{\textit{dept. name of organization (of Aff.)} \\
% \textit{name of organization (of Aff.)}\\
% City, Country \\
% email address or ORCID}
% \and
% \IEEEauthorblockN{3\textsuperscript{rd} Given Name Surname}
% \IEEEauthorblockA{\textit{dept. name of organization (of Aff.)} \\
% \textit{name of organization (of Aff.)}\\
% City, Country \\
% email address or ORCID}
% \and
% \IEEEauthorblockN{4\textsuperscript{th} Given Name Surname}
% \IEEEauthorblockA{\textit{dept. name of organization (of Aff.)} \\
% \textit{name of organization (of Aff.)}\\
% City, Country \\
% email address or ORCID}
% \and
% \IEEEauthorblockN{5\textsuperscript{th} Given Name Surname}
% \IEEEauthorblockA{\textit{dept. name of organization (of Aff.)} \\
% \textit{name of organization (of Aff.)}\\
% City, Country \\
% email address or ORCID}
% \and
% \IEEEauthorblockN{6\textsuperscript{th} Given Name Surname}
% \IEEEauthorblockA{\textit{dept. name of organization (of Aff.)} \\
% \textit{name of organization (of Aff.)}\\
% City, Country \\
% email address or ORCID}
% }

\maketitle

\begin{abstract}
Zero-shot link prediction (ZSLP) on knowledge graphs aims at automatically identifying relations between given entities. Existing methods primarily employ auxiliary information to predict tail entity given head entity and its relation, yet face challenges due to the occasional unavailability of such detailed information and the inherent simplicity of predicting tail entities based on semantic similarities. Even though Large Language Models (LLMs) offer a promising solution to predict unobserved relations between the head and tail entity in a zero-shot manner, their performance is still restricted due to the inability to leverage all the (exponentially many) paths' information between two entities, which are critical in collectively indicating their relation types. To address this, in this work, we introduce a \textbf{C}ondensed \textbf{T}ransition Graph Framework for Zero-Shot \textbf{L}ink \textbf{P}rediction (CTLP), which encodes all the paths' information in linear time complexity to predict unseen relations between entities, attaining both efficiency and information preservation. Specifically, we design a condensed transition graph encoder with theoretical guarantees on its coverage, expressiveness, and efficiency. It is learned by a transition graph contrastive learning strategy. Subsequently, we design a soft instruction tuning to learn and map the all-path embedding to the input of LLMs.
Experimental results show that our proposed CTLP method achieves state-of-the-art performance on three standard ZSLP datasets.\footnote{The code is available here: \url{https://github.com/ToneLi/Graph_LLM_link_predcition}}

\end{abstract}

% in challenging zero-shot learning scenario
% \begin{abstract}
% This document is a model and instructions for \LaTeX.
% This and the IEEEtran.cls file define the components of your paper [title, text, heads, etc.]. *CRITICAL: Do Not Use Symbols, Special Characters, Footnotes, 
% or Math in Paper Title or Abstract.
% \end{abstract}
\begin{IEEEkeywords}
Zero-Shot Link Prediction, Condensed
Transition Graph, Large Language Models
\end{IEEEkeywords}

\section{Introduction}
Knowledge graphs (KGs)~\citep{li2020multi}, which are rich structured representations of real-world entities and their relations, are foundational in various applications, from semantic search to recommendation systems \citep{li2022semantic,li2023understand,li2023petailor}. However, despite their extensive utility, most KGs suffer from an intrinsic shortcoming: incompleteness, which makes them unable to encapsulate the full breadth of evolving concepts. The task of KG link prediction therefore arises that aims to predict relation types between given entities in KGs \citep{wang2021survey}. Conventional methods \citep{bordes2013translating,kazemi2018simple,rossi2021knowledge} in KG link prediction learn low-dimensional representations of entities and 
relations, which are then used to infer links between entities. However, these methods rely on observed links to infer missing ones, limited by  \textit{1) the data they have been trained on} and \textit{2) struggling to generalize to unseen entities or relations}. In light of both limitations, the motivation for Zero-Shot Link Prediction on KGs becomes apparent.

%Knowledge Graph link prediction aims to predict relation types between given entities in knowledge graphs (KGs) \citep{rossi2021knowledge,wang2021survey}. Conventional methods tend to leverage extensive annotated data to learn low-dimensional representations of entities and relations

%The majority of these methods learn low-dimensional representations of entities and relations (i.e., knowledge graph embeddings (KGE)), which are then used to infer links between entities. Traditional approaches~\cite{bordes2013translating} assume that all relation types are known in the KG. This assumption is however unrealistic since KGs are inherently incomplete. To tackle this issue, the zero-shot link prediction (ZSLP)  has been introduced for identifying unseen relations.

% Little previous work exists on ZSLP as the task is relatively new~\cite{qin2020generative,geng2021ontozsl,li2022hierarchical}. Most efforts focus on using textual features~\cite{qin2020generative,li2022hierarchical} or ontologies~\cite{geng2021ontozsl} as auxiliary information for the task. Particularly, \citet{li2022hierarchical} use surface names of entities and relations while \citet{qin2020generative} use the textual descriptions of relations. 

    \begin{figure}
        \centering
        \includegraphics[width=0.5\columnwidth]{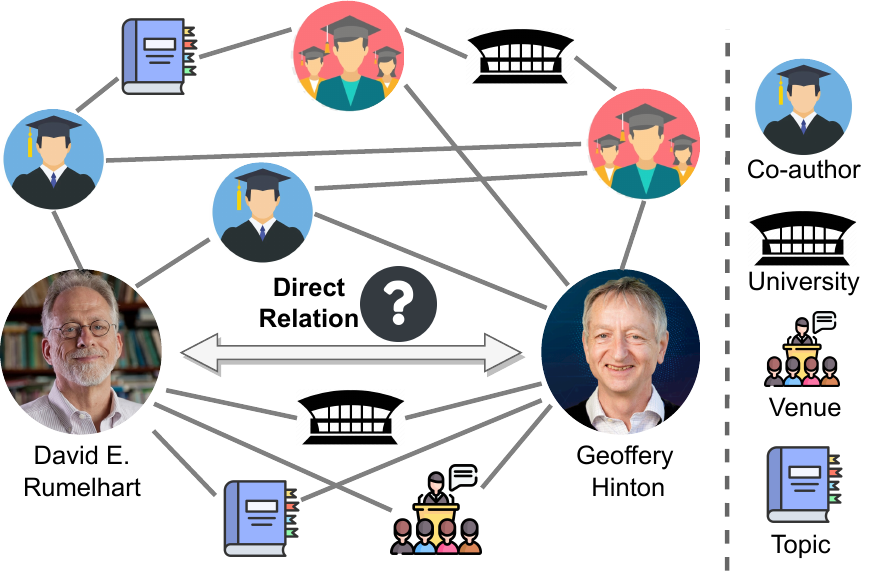}  %\columnwidth
        \vspace{-1mm}
        \caption{Example of predicting the direct relation between two scholar entities in zero-shot by only giving their local neighboring information. Local neighboring information indicates there are many multi-hop paths with different intermediate entities, including co-author entity, venue entity, topic entity, etc.}
        \label{fig: intro}
        \vspace{-4mm}
    \end{figure}

To date, zero-shot link prediction on KGs is still under-explored, where previous works \citep{qin2020generative,geng2021ontozsl,li2022hierarchical} have utilized textual features or ontologies as auxiliary information to discern relationships between head entities, seen relations, and tail entities. Specifically, they have focused on predicting tail entities given a head entity and a relation, regardless of whether that relation has been previously encountered in the training data. However, rich text features and ontologies may not always be available in KG. In addition, when given head entities and the relation, predicting tail entities is relatively easy since semantic similarity would greatly reduce the pool of candidate tail entities \citep{zeng2020degree}.

In this work, as illustrated in Figure \ref{fig: intro}, we focus on an exploration of a novel zero-shot link prediction task on KGs, which aims to predict unseen relation types between head and tail entities without depending on learned model parameters that tie seen relations to specific head and tail entities. As a general task solver, Large Language Models (LLMs) can intuitively be employed to predict the unseen relation \citep{ling2023domain,bai2024beyond}. Existing methods \citep{pan2024unifying,ye2023natural} suggest querying LLMs about the relationship between specific head and tail entities, often enriching these queries with information about neighbors. However, general neighbors are insufficient and distracting for link prediction. Instead, it is important to focus on the (higher-order) neighbors that form any paths between head and tail entities. As shown in Figure \ref{fig: intro}, to predict the direct relationship between two scholar entities: \textit{David E. Rumelhart} and \textit{Geoffrey E. Hinton}, there exist many multi-hop paths connecting both entities, and a high portion of these paths are related to co-authorship. This path information can provide a more concentrated source of relational cues than feeding the local neighboring information as a whole. LLMs can thus leverage path information as in-context demonstrations to more precise predictions, i.e., co-authorship. However, extracting and summarizing these paths is fraught with computational hurdles, given the NP-hard nature of path retrieval between entities in million-scale KGs~\citep{michel2023path,ling2023open}.

To balance the trade-off between efficiency and information preservation of information encoding between head and tail entities toward link prediction, we propose a novel framework: \textbf{C}ondensed \textbf{T}ransition Graph Framework for Zero-Shot \textbf{L}ink \textbf{P}rediction (CTLP), which distill all the paths' information into a \textit{condensed transition graph} embedding with linear time complexity. Then, it allows LLMs to fully incorporate path information to predict relation types given a head entity and tail entity. Specifically, CTLP consists of three main components: (1) A new condensed transition graph encoder is proposed to predict the embedding aggregated from all the condensed paths between head and tail entities via different edges, with coverage, expressiveness, and efficiency guarantees; (2) A graph-centric contrastive learning is designed to learn the condensed transition graph encoder; (3) to encode the condensed graph information into the LLMs, we propose a new prefix-tuning method to encode graph information into the task instructions. We provide our key contributions as follows: 
%%%%%%%%%%%
%Therefore, a novel \textbf{C}ondensed \textbf{T}ransition Graph Framework is proposed for Zero-Shot \textbf{L}ink \textbf{P}rediction (CTLP), it leverages semantic information derived from a condensed transition graph to help the  LLM predict the relationship between the given head and tail entity. CTLP consists of three main components: (1) a new condensed transition graph is used to minimize the path size from the transition graph. (2) Graph-centric contrastive learning is utilized to guide the encoder of the condensed transition graph, ensuring comprehensive coverage of path information. (3) To incorporate the condensed graph information into the LLMs, we introduce a soft-prompt-based method to adapt the graph information to the LLMs. We provide our key contributions as follows:

\begin{itemize} %[leftmargin=*]\setlength\itemsep{0em}
    \item \textbf{Problem}. We propose to solve the task of zero-shot link prediction in KGs by encoding all the paths' information in linear time complexity and letting LLMs make zero-shot predictions with soft prompt tuning. 
    %a novel Condensed Transition Graph that adapts the LLMs for zero-shot Link Prediction. And graph-centric contrastive learning is used to ensure the path information is comprehensive.
    % \item 2. We are able to reduce the path size and successfully cover all the nodes and all the edges in the total time complexity of $O(d*n+m)$.
    \item \textbf{Technique}. We propose to summarize all paths between given entities into a condensed transition graph while retaining the same expressiveness with a theoretical guarantee. Graph-centric contrastive learning is designed to ensure the path information is being injected into the soft prompt.
    \item \textbf{Experiments}. We conduct a thorough analysis of our method both quantitatively and qualitatively, which demonstrates better performance than existing state-of-the-arts. Additional experiments, including the ablation study and sensitivity analysis, illustrate the robustness of CTLP.
\end{itemize}
\section{Related Work}

\noindent\textbf{Link Prediction.} Many studies~\citep{yang2014embedding, bordes2013translating,balavzevic2019tucker,li2022hierarchical,zhang2023making,yao2023exploring,ye2023natural} have been proposed to better predict the relationship between the head and tail entity. The translation-based model TransE~\citep{bordes2013translating} requires that the tail entity embedding is close to the sum of the head and relation embeddings; 
% The bilinear model DistMult~\cite{yang2014embedding} that interprets the relation as a bilinear map and uses multiplicative interactions to learn entity and relation embeddings.
The non-bilinear model TuckER~\citep{balavzevic2019tucker} utilizes the tucker decomposition to build the connection between different knowledge graph triples. KG-BERT~\citep{yao2019kg} regards entities and relations as textual sequences, transforming the link prediction task into a classification problem. KopA~\citep{zhang2023making} feeds the entity and relation embedding into LLM  in the format of soft-prompting.  Although performance has been achieved incrementally, these approaches in their original form are unable to learn embeddings for unseen relations. This is because they learn entities and relation embeddings using the topological structure of the KG.

\noindent\textbf{Zero-shot Link Prediction.} Previous works tend to predict the tail by giving the head entity and unseen relation~\citep{qin2020generative}. \cite{qin2020generative} used textual information of the relation as auxiliary information and applied a Zero-Shot Generative Adversarial Network (ZSGAN) to learn the unseen relation embedding for the task. A hierarchical n-gram framework (HNZSLP)~\citep{li2022hierarchical} is proposed to use the structural information in relational n-grams for zero-shot link prediction.  Despite the success, there exists a research gap in investigating the selection of relations from a given candidate relation set without training on the seen relation dataset.
To achieve this objective, an intuitive idea is to utilize the LLMs, such as LLama~\citep{touvron2023llama}, to predict the relations,  for example,  by constructing the prompt and feeding the head entity and tail entity in the format of \textit{what is the relationship between the head entity and tail entity?} to the LLM. Unfortunately, this method is easy to cause hallucinations, because it fails to use enough knowledge to guide the LLM. So, in this work, we try to explore how to encode the path information between the head entity and the tail entity to predict the relationship in a new zero-shot learning setting.

% \subsection{}

\section{Problem Statement}
\label{con:task defination}
This section formulates our problem of \emph{Zero-shot Link Prediction}.

\noindent\textbf{Zero-shot link prediction on (knowledge) graphs.} This task is formulated as predicting the relation type between any two entities of a knowledge graph without any training on the current knowledge graph. More specifically, we aim at predicting the relation type $r\in C_{(s,t)}$ between a head entity $s$ and a tail entity $t$, where $C_{(s,t)}$ is the set of possible relation types.

The relation between $s$ and $t$ is well-centered among all the possible routes through which $s$ could correlate to $t$, which can be collectively formulated as an \emph{$(s,t)$-transition graph}.

 \paragraph{Transition graph.} 
 % All nodes and the edges between the head entity $s$ and tail entity $t$   constitute the Hamilton graph $\mathcal{G}^h$ about $s$ to $t$.
For any pair of two entities $(s,t)$ in a knowledge graph, all the paths from head entity $s$ to tail entity $t$ collectively form an $(s,t)$-transition graph. Figure \ref{fig: intro} exemplifies an $(s,t)$-transition graph where $s$ is ``David E. Rumelhart'' and $t$ is `` Geoffery Hinton''. In practice, the length of paths can be upper-bounded by an integer $k$, which can usually be set as the diameter of the knowledge graph.
 To be specific, let $\mathcal{G}=(V,E)$ denote an $(s,t)$-transition graph consisting of a set of node $V$ and a set of edges $E\in V \times V$. We denote by $n$ the number of nodes in $\mathcal{G}$ and by $m$ its number of edges. 
A path from head entity $s$ and tail entity $v$ is denoted by $\pi=[v_0,r_0, v_1,...,v_{j-1}, r_i, v_j]$ such that $v_0=s, v_j=t$, $(v_j,v_{j+1}) \in E$. The textual description of the path $\pi$ is described as: $T(\pi)=$ \textit{the relationship between $v_0$ and $v_1$ is $r_0$ ,..., the relationship between $v_{j-1}$ and $v_j$ is $r_i$}.

\begin{figure*}[t]
        \centering
        \includegraphics[width=0.8\textwidth]{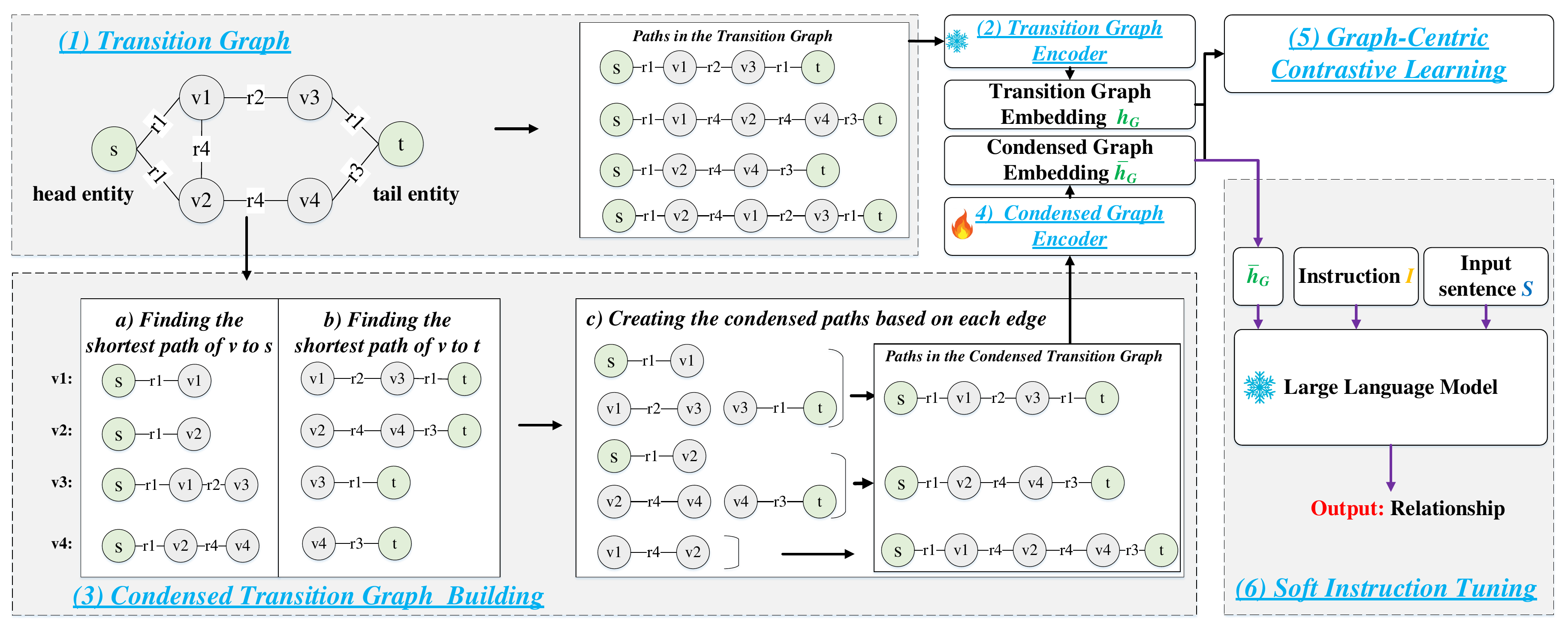}
        \vspace{-1mm}
	\caption{Overview of CTLP. During the training process, the language model parameters are frozen, and only the condensed graph encoder is trained.}
 \vspace{-1mm}
	\label{con:framework_overview}
\end{figure*}

\section{Proposed Approach: CTLP}
\subsection{Motivation and Approach Overview}
To achieve the link prediction between $s$ and $t$, the most commonly used way is to learn the embeddings of $s$ and $t$ with graph models (typically graph neural networks) and then calculate the similarity of their embeddings to predict their relation type $r$. Such a way, though efficient with $O(|E|)$, is limited in considering higher-order neighborhoods spanning $s$ and $t$ due to the inherently limited number of layers of graph models. However, higher-order consideration is critical, especially in zero-shot link prediction, as it provides a broader context with more chance to generalize across different domains. On the contrary, the thorough way that preserves all the information is to consider all the possible paths traversing from $s$ to $t$. However, the number of such paths is exponential to the number of edges $|E|$, which makes it prohibitive.

To address the above issue, we aim at a new framework that can not only preserve higher-hop relation information between $s$ and $t$ but also is efficient, with linear time complexity. More specifically, we propose a Condensed Transition Graph Enhanced LLM for the zero-shot Link Prediction (CTLP), which distills the information from all the paths between $s$ and $t$ to a \emph{condensed transition graph encoder} that can learn the information in the neighborhood of all the hops with $O(|E|)$ time complexity, as detailed in Section~\ref{sec:encoder} and theoretically analyzed in Section~\ref{sec:theories}, and illustrated in Figures~\ref{con:framework_overview}(3) and~\ref{con:framework_overview}(4). To let the condensed transition graph encoder better approximate the information from all the paths between $s$ and $t$, we design a contrastive learning strategy that minimizes their divergence, as detailed in Section~\ref{con:method_CL_part} and illustrated in Figures~\ref{con:framework_overview}(1),~\ref{con:framework_overview}(2), and ~\ref{con:framework_overview}(5). Finally, the learned embedding will be input into LLMs with the transferrer that is learned in Section~\ref{sec:instruction_tuning} and illustrated in Figure~\ref{con:framework_overview}(6).

% 

%%%%%%%%
% Based on the above discoveries, to balance the trade-off between time and comprehensiveness,  we propose \textbf{C}ondensed \textbf{T}ransition Graph Enhanced LLM for the zero-shot \textbf{L}ink \textbf{P}rediction (CTLP).
% In the condensed transition graph, we focus on subsets for the paths between the start node of each edge and the head entity, as well as the paths between the end node of each edge and the tail entity, namely the shortest paths.  
% Specifically, as illustrated in Figure~\ref{con:framework_overview}, CTLP consists of four main steps: (1) it first finds all sample paths in the transition graph, followed by employing a fixed \textbf{Transition Graph  Encoder} to learn the semantic information of these paths; (2) Subsequently, a \textbf{Condensed Transition Graph} is proposed to reduce the number of paths from the transition graphs while retaining all node and edge information. A tunable \textbf{Condensed Graph Encoder} is employed to effectively encode the path information in the condensed transition graph; (3) To offset the missed path information, a self-supervised learning module (\textbf{Graph-Centric Contrastive Learning}) is employed to bridge the knowledge between the transition graph and condensed transition graph; (4) The learned condensed transition graph information is fed into the LLM along with the input entities, facilitating relationship prediction. This progress refers to \textbf{Soft Instruction Tuning} in Figure~\ref{con:framework_overview}.

\subsection{Condensed Transition Graph Building and Encoder}
\label{sec:encoder}
% We demonstrate at the end of this section that such decomposition leads to a total time complexity of $O(|E|)$.

Here, the embedding of the information of each path between $s$ and $t$ through $(u,v)$ can be decomposed into: $(s,u)$ path embedding, $(u,v)$ edge embedding, and $(v,t)$ path embedding. To estimate the $(s,u)$ and $(v,t)$ path embedding, we have proposed different ways detailed in Section 4.2.
We demonstrate at the end of this section that such decomposition leads to a total time complexity of $O(|E|)$.

In this section, we propose our Condensed Transition Graph encoder that can calculate the embedding in linear time complexity (as demonstrated at the end of this section). More concretely, in the condensed transition graph encoder, we leverage the principle that all the paths between $s$ and $t$ can be split into each subset of paths between $s$ and $t$ going through each edge $(u,v)\in E$. Hence, the entirety of the embedding of all the paths between $s$ and $t$ is equivalent to the aggregation of the embeddings of paths pertaining to different edges $(u,v)\in E$, which is denoted as follows:
\begin{align}
	&\bar h_{\mathcal{G}}= \frac{1}{|E|} \sum\nolimits_{(u,v)\in E} \overline{h_{(u,v)}} \nonumber 
\end{align}
where $E$ is the edge set in $\mathcal{G}$. $\overline{h_{(u,v)}}$ denotes the embedding of all the paths between $s$ and $t$ through an edge $(u,v)$, and is calculated by the encoding of the composition of the three segments of this path, which is the key to achieving linear time complexity, as denoted in the following: 
\begin{align}
        &\overline{h_{(u,v)}}= CGE (h_{(s\rightarrow u,v\rightarrow t)}) \nonumber 
\end{align}
where $CGE$ is an encoder, such as the encoder part in a Transformer. $(s\rightarrow u,v\rightarrow t)$ is called a \emph{condensed path} via an edge $(u,v)$ and a \emph{condensed transition graph} $\mathcal G^*$ is defined as the composition of all the condensed paths via all the edges.   and $h_{(s\rightarrow u,v\rightarrow t)}$ is the concatenation of the embeddings  of the three segments of the path between $s$ and $t$ via $(u,v)$, namely:
\begin{align*}
&h_{(s\rightarrow u,v\rightarrow t)}= h_{(s\rightarrow u)} | h_{(u,v)} | h_{(v\rightarrow t)}
\end{align*}
where $h_{(s\rightarrow u)}$, $h_{(u,v)}$, and $h_{(v\rightarrow t)}$ are the paths' embedding from $s$ to $u$, edge embedding of $(u,v)$, and the paths' embedding from $v$ to $t$, respectively.  ``$|$'' is the concatenation function. This step is also illustrated in the section ``c)'' of Figure~\ref{con:framework_overview}(3). In the following, we elaborate our efficient methods to calculate the path embedding $h_{(s\rightarrow u)}$ and $h_{(v\rightarrow t)}$, as well as edge embedding $h_{(u,v)}$, respectively. 

The goal of embedding computation of the path $h_{(s\rightarrow u)}$ is to calculate the embedding that can enclose the correlation information between $s$ and $u$. An intuitive option is to aggregate all the paths' embedding between $s$ and $u$. However, a better trade-off between efficiency and information preservation can be a sampling or traversal through the most representative path(s) between $s$ and $t$. Hence, methods like the \emph{shortest path search} and \emph{fattest path search} between $s$ and $t$ provide a reasonable trade-off. 
Figure~\ref{con:framework_overview} provides an example of this shortest path collection, such as the shortest path of $v1$ to $s$ is $s-r1-v1$, its textual description $T((v1\rightarrow s))$   is ``the relationship between $s$ and $v1$ is $r1$''. The embedding of $T((v1\rightarrow s))$  is initialized by  a pre-trained language model, like LLaMA. Embedding computation of the path $h_{(v\rightarrow t)}$ can be calculated similarly, while the calculation of $h_{(u,v)}$ can be done by using  LLaMA to initialize its textual embedding for the statement "the relationship between $u$ and $v$ is $r$", $r$ is the relationship between $u$ and $v$.

\subsection{Complexity and Expressiveness of Condensed Transition Graph Encoder}
\label{sec:theories}

Here, we demonstrate the critical merits of our condensed transition graph encoder in terms of its coverage, expressiveness, and time complexity. 

\begin{lem}[Path Coverage]\label{lem: coverage}
    The condensed graph covers all paths from $s$ to $t$ when all relevant paths are within $k$ hops.
\end{lem}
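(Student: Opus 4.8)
The plan is to prove coverage by a direct set-containment argument on the family of $s$-$t$ paths, leveraging the definitional fact that the condensed transition graph $\mathcal{G}^*$ contains a condensed path $(s\rightarrow u, v\rightarrow t)$ for \emph{every} edge $(u,v)\in E$. The essential observation is that any path $\pi$ from $s$ to $t$ necessarily traverses at least one edge of $\mathcal{G}$, and decomposing $\pi$ at any such edge exhibits it as an instance of the corresponding condensed path's three-segment structure $h_{(s\rightarrow u)} \,|\, h_{(u,v)} \,|\, h_{(v\rightarrow t)}$.

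First I would fix an arbitrary path $\pi = [v_0, r_0, v_1, \ldots, v_{j-1}, r_{j-1}, v_j]$ with $v_0 = s$, $v_j = t$, and $j \leq k$, then isolate any constituent edge $(u,v) = (v_i, v_{i+1})$. Because $\pi$ is a valid path in the transition graph $\mathcal{G}$, each such edge belongs to $E$. The prefix $[v_0, \ldots, v_i]$ is then a path from $s$ to $u$ and the suffix $[v_{i+1}, \ldots, v_j]$ is a path from $v$ to $t$, so $\pi$ matches exactly the decomposition $s \rightarrow u$, edge $(u,v)$, $v \rightarrow t$ on which the condensed path via $(u,v)$ is built. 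Hence $\pi$ lies in the subset of $s$-$t$ paths routed through $(u,v)$, and since $\mathcal{G}^*$ aggregates a condensed path for this edge, $\pi$ is covered.

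The role of the $k$-hop hypothesis is the second key step: I would argue that the ``within $k$ hops'' assumption is precisely what guarantees $E$ contains every edge appearing on a relevant path. Since $\mathcal{G}$ is the transition graph assembled from all $s$-$t$ paths of length at most $k$, an edge lies in $E$ exactly when it occurs on some such path; the hypothesis that all relevant paths have length $\leq k$ therefore ensures no edge of any relevant path is omitted from $E$, and consequently no such path escapes the union $\bigcup_{(u,v)\in E}$ of the condensed-path subsets. Taking the union over all edges then yields coverage of the entire path family.

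I expect the main obstacle to be conceptual rather than computational: pinning down the right formal meaning of ``covers.'' The lemma is essentially a restatement of the partition-by-edge principle quoted earlier, so the subtlety is ensuring that the coverage notion is the edge-incidence one (every path passes through some edge that carries a condensed path) rather than a stronger exact-reconstruction notion, since the condensed path replaces $\pi$'s actual prefix and suffix with representative (e.g., shortest or fattest) sub-paths and thus covers $\pi$ structurally without reproducing it verbatim. I would also dispatch the degenerate case $s = t$ separately, where the only path is trivial and coverage holds vacuously.
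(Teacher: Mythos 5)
Your proposal is correct and takes essentially the same approach as the paper: both arguments decompose an arbitrary relevant path against the condensed graph's definitional scaffolding (for every edge $(u,v)\in E$, a condensed path built from a representative $s\rightarrow u$ prefix, the edge $(u,v)$, and a representative $v\rightarrow t$ suffix), and both invoke the $k$-hop hypothesis to guarantee that no constituent of a relevant path is omitted. The only difference is presentational --- you anchor the decomposition at a single edge (edge-incidence coverage), whereas the paper phrases it node-wise (every node is reachable from $s$ and has a continuation to $t$ via shortest paths in $\mathcal{G}^*$, so segments reassemble into a corresponding path); your explicit clarification that ``covers'' means structural coverage rather than verbatim reconstruction is a sharpening the paper leaves implicit.
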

\begin{proof}[Proof Sketch]
    By finding the shortest path from $s$ to every node $v$ in the transition graph $\mathcal{G}$, we ensure that we have a way to reach every node starting from $s$. This guarantees that any node that is part of a path from $s$ to $t$ in the original subgraph is reachable in the condensed graph $\mathcal{G}^*$ through one of these shortest paths. Similarly, by finding the shortest path from every node $v$ to $t$, we ensure there is a way to reach $t$ from any node $v$ in $\mathcal{G}$. This guarantees that for any part of a path from $s$ to $t$ that passes through any $v$, there exists a continuation to $t$ in the condensed graph $\mathcal{G}^*$. When all relevant paths are within $k$ hops, for any path from $s$ to $t$ in $\mathcal{G}$, it can be decomposed into segments where each segment starts and ends at nodes where the shortest paths from $s$ to $t$ were calculated. Since $\mathcal{G}^*$ contains all these shortest paths, the segments can be combined to form a path $\pi$ in $\mathcal{G}^*$  that corresponds to the original path.
\end{proof}

\begin{lem}[Expressiveness]
    The condensed transition graph $\mathcal{G}^*$ retains the same expressive power of the original transition graph $\mathcal{G}$ when all relevant paths are within $k$-hop in $\mathcal{G}$.
\end{lem}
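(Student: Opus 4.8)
The plan is to interpret ``expressive power'' as the class of functions over transition graphs that the encoder can realize, or equivalently the set of pairs of transition graphs it can distinguish, and then to establish a two-directional containment: the encoder on $\mathcal{G}^*$ represents neither more nor less than the encoder acting on the full path family of $\mathcal{G}$. The easy direction is \emph{soundness}. Every condensed path $(s\rightarrow u, v\rightarrow t)$ retained in $\mathcal{G}^*$ is assembled from a genuine $s$-to-$u$ path, the edge $(u,v)\in E$, and a genuine $v$-to-$t$ path, all present in $\mathcal{G}$; hence $\mathcal{G}^*$ injects no relational information absent from $\mathcal{G}$, so any two inputs merged by the full encoder are also merged by the condensed one.

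The substantive direction is \emph{completeness}, which I would reduce directly to Lemma~\ref{lem: coverage}. First I would use that coverage result to fix a correspondence assigning to every $s$-$t$ path $\pi$ in $\mathcal{G}$ a condensed path in $\mathcal{G}^*$, obtained by decomposing $\pi$ at each traversed edge $(u,v)$ and replacing the $s$-to-$u$ prefix and $v$-to-$t$ suffix by the stored representative (shortest or fattest) paths. Because the aggregation $\bar h_{\mathcal{G}}=\frac{1}{|E|}\sum_{(u,v)\in E}\overline{h_{(u,v)}}$ ranges over \emph{all} edges of $\mathcal{G}$, and each term $\overline{h_{(u,v)}}=CGE(h_{(s\rightarrow u,v\rightarrow t)})$ encodes the complete routed segment $h_{(s\rightarrow u)}\,|\,h_{(u,v)}\,|\,h_{(v\rightarrow t)}$, no edge-indexed contribution is discarded, so the relational signal carried by the exponentially many raw paths survives in the condensed embedding.

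To upgrade this correspondence into an equality of expressive power, I would invoke the universality of the segment encoder $CGE$ (a Transformer encoder is a universal approximator on its input sequences): since the concatenated segment embedding faithfully records each condensed path, there exists a parameterization of $CGE$ and of the aggregation reproducing, up to the claimed equivalence, any embedding computable from the full path set of $\mathcal{G}$. Combined with soundness, this yields the two-sided containment and hence the stated equivalence of expressive power.

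The main obstacle lies in completeness at the level of \emph{distinguishing} graphs rather than merely covering paths: collapsing each segment to a single representative path could, a priori, map two transition graphs that the full encoder separates onto the same condensed embedding. Controlling this requires an injectivity (or information-preserving) property of the per-segment embeddings $h_{(s\rightarrow u)}$ and $h_{(v\rightarrow t)}$ under the $k$-hop hypothesis, so that the representative determines the reachability and relational structure of its segment. I expect the argument to either assume injectivity of $CGE$ on the relevant input domain or to derive it from the fact that, within $k$ hops, the shortest/fattest representatives together with the edge multiset $E$ uniquely pin down every $s$-$t$ path class, whence distinct transition graphs induce distinct condensed embeddings.
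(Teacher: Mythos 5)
Your proposal interprets ``expressive power'' as the class of functions (equivalently, the distinguishing power) of the \emph{encoder} pipeline, and then attempts a two-sided containment. The paper proves something different and weaker in a structural sense: it takes ``same expressive power'' to mean WL-equivalence of the two graphs $\mathcal{G}$ and $\mathcal{G}^*$ themselves. Concretely, the paper's proof uses the coverage lemma to get connectivity, writes path existence as the recursive first-order formula $P_1(s,t)\equiv E(s,t)$, $P_k(s,t)\equiv \exists_v P_{k-1}(s,v)\wedge E(v,t)$, and then closes the argument by citing Theorems 3.1 and 4.1 of \citep{graziani2023no}, which connect this logical characterization of path existence to WL-equivalence for $k\ge 2$. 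No claim about the encoder $CGE$, its injectivity, or its universality is made or needed there.

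Your version has a genuine gap, and you identify it yourself but do not close it. The completeness step rests on the assertion that because the aggregation $\bar h_{\mathcal{G}}=\frac{1}{|E|}\sum_{(u,v)\in E}\overline{h_{(u,v)}}$ ranges over all edges, ``no edge-indexed contribution is discarded, so the relational signal carried by the exponentially many raw paths survives.'' This is a non sequitur: each term $\overline{h_{(u,v)}}$ encodes only \emph{one} representative $s\!\to\!u$ prefix and \emph{one} representative $v\!\to\!t$ suffix, not the whole family of paths through $(u,v)$, so information about the multiplicity and diversity of paths routed through each edge is lost at the embedding level before $CGE$ is ever applied. Your appeal to Transformer universality cannot repair this, because universal approximation concerns which functions of the \emph{given input} can be realized; no parameterization of $CGE$ can recover information absent from its (already condensed) input. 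The two escape routes you sketch --- assuming injectivity of $CGE$ on the relevant domain, or arguing that shortest/fattest representatives together with $E$ ``uniquely pin down every $s$-$t$ path class'' --- are stated as expectations, not proved, and the second is precisely the contested claim restated. The paper avoids this entire difficulty by never asserting information preservation of the embeddings; it only asserts a graph-level logical/WL equivalence, delegated to the cited external theorems.
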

\begin{proof}[Proof Sketch]
    First, $\mathcal{G}^*$ ensures the connectivity from $s$ to $v$ for every node $v$ in $\mathcal{G}^*$ as proved in Lemma \ref{lem: coverage}. Let $P_k(s, t)$ be the formula stating the existence of a path from $s$ to $t$ of length $k$ and let $E(s, t)$ be an edge connecting the nodes $s$ and $t$. $P_k(s, t)$ can be defined recursively as follows:
    \begin{align*}
        P_1(s,t)&\equiv E(s,t),\\
        P_k(s,t)&\equiv \exists_v P_{k-1}(s,v) \wedge E(v,t).
    \end{align*}
    Then, $\mathcal{G}^*$ and $\mathcal{G}$ can achieve the WL-equivalent (i.e., $\mathcal{G}^*$ and $\mathcal{G}$ have the same expressive power) for $k\ge 2$ according to Theorem 3.1 and 4.1 in \citep{graziani2023no}.
\end{proof}

\begin{lem}[Time complexity] The time complexity to approximate the embedding of all the paths between $s$ and $t$ by our Condensed Transition Graph Encoder is linear in the number of edges in $\mathcal{G}$.
\end{lem}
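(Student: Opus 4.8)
The plan is to bound the total running time by summing the cost of each stage of the encoder and showing that every stage is at most $O(|E|)$. The key structural observation is that the condensed transition graph $\mathcal{G}^*$ contains exactly one condensed path $(s\rightarrow u, v\rightarrow t)$ per edge $(u,v)\in E$, so the number of objects the encoder must process is $|E|$, rather than the exponentially many source-to-target paths in $\mathcal{G}$. First I would decompose the computation into four stages: (i) computing the segment embeddings $h_{(s\rightarrow u)}$ for all nodes $u$; (ii) computing the segment embeddings $h_{(v\rightarrow t)}$ for all nodes $v$; (iii) for each edge, producing the edge embedding $h_{(u,v)}$, assembling the concatenation $h_{(s\rightarrow u)}\,|\,h_{(u,v)}\,|\,h_{(v\rightarrow t)}$, and applying $CGE$ to obtain $\overline{h_{(u,v)}}$; and (iv) the final aggregation $\bar h_{\mathcal{G}}=\frac{1}{|E|}\sum_{(u,v)\in E}\overline{h_{(u,v)}}$.

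For stage (i), the crucial point is that a single single-source traversal from $s$ — breadth-first search in the unweighted case, or the Dijkstra-style variant used for the \emph{fattest path search} — simultaneously yields the representative path from $s$ to every node $u$, together with a shortest-path tree from which each such path is read off incrementally. Since each path has length at most $k$ (the fixed diameter bound) and the textual-description embedding of a $\le k$-hop path is produced by one bounded-cost language-model pass, the amortized cost of producing $h_{(s\rightarrow u)}$ over all $u$ is dominated by the traversal itself, which is $O(|V|+|E|)=O(|E|)$. Stage (ii) is identical after reversing edge orientation and running the traversal with $t$ as the single target (equivalently, single source on the reversed graph), again $O(|E|)$. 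This shared-computation argument is the part I expect to be the main obstacle to state precisely: one must argue that the representative segments for \emph{all} nodes are obtained from just two traversals and not recomputed per edge, since a naive per-edge path search would inflate the bound to $O(|E|^2)$ or worse.

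With the per-node segment embeddings precomputed and cached, stage (iii) becomes immediate: each edge contributes one textual edge embedding $h_{(u,v)}$ (one bounded language-model pass), one constant-time concatenation of three fixed-dimension vectors, and one $CGE$ pass on an input whose length is bounded by the three bounded-length segments; hence each edge costs $O(1)$ and the stage costs $O(|E|)$ overall. Stage (iv) sums $|E|$ fixed-dimension vectors and scales by $1/|E|$, which is also $O(|E|)$. Adding the four stages gives $O(|E|)+O(|E|)+O(|E|)+O(|E|)=O(|E|)$, and since the graph is connected along relevant paths we have $|V|=O(|E|)$, so the overall time complexity is linear in the number of edges of $\mathcal{G}$, as claimed. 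The remaining care in writing this up is to make explicit that treating the embedding dimension, the hop bound $k$, and the per-token language-model cost as constants is precisely the convention under which "linear in $|E|$" is meant, so that these factors are absorbed into the hidden constant of the $O(\cdot)$ notation.
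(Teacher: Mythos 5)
Your proposal is correct and follows essentially the same route as the paper's proof: two single-source traversals (BFS-style) from $s$ and from $t$ to obtain all segment embeddings, followed by constant work per edge and an $O(|E|)$ aggregation, concluding via connectivity that the total is linear in $|E|$. Your write-up is simply a more explicit version of the paper's two-step argument, usefully spelling out the shared-computation point (segments come from two traversals, not a per-edge search) and the convention that the embedding dimension, hop bound $k$, and per-pass language-model cost are treated as constants.
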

\begin{proof}[Proof Sketch] The calculation of embedding consists of two steps: 1) computing the path embedding of $(s\rightarrow u)$, edge embedding $(u,v)$, and path embedding $(v\rightarrow t)$, which can be done in $O(n)$, where $n$ is the number of nodes (if using popular algorithms like breath-first search for shortest path search around $s$ and $t$, respectively); and 2) aggregating the above embeddings for all the edges, which can be done in $O(m)$. Therefore, the total time complexity is $O(m)$ since $G$ is connected.
\end{proof}

\subsection{Graph-Centric Contrastive Learning}
\label{con:method_CL_part}
Here, we elaborate on how to train our condensed transition graph encoder proposed above.  Again, our objective is to let the embedding $\bar h_{\mathcal G}$ output by the condensed transition graph encoder preserve all the information carried by all the paths in $\mathcal G$. To achieve this, we propose a graph-centric contrastive learning (GCCL) method that minimizes the information gap between them. Specifically, for each $(s,t)$ pair,  GCCL aims to reduce the distance between the embedding $\bar h_{\mathcal G}$ from the condensed transition graph encoder and the embedding $h_{\mathcal G}$ extracted from all the paths in $\mathcal G$ (see the next paragraph about the computation of $h_{\mathcal G}$). To compose negative samples, we just use $(s,t)$ for one embedding while focusing on $(s',t')$ for the other, where $(s,t)\ne(s',t')$, by maximizing the distance between them. Note that one will tend to use small transition graph samples $\mathcal G$'s to train the condensed transition graph encoder efficiently.
% \begin{small}
% \begin{align}
% & L_{CL}=\sum_{i=1}^{Y} -log \frac{exp(\mathcal{G^*}_i \cdot \mathcal{G}_i/ \tau)}{\sum_{\mathcal{G}_i^n \in N_i}^{} exp(\mathcal{G}_i  \cdot \mathcal{G}_i^n)/ \tau)}   \nonumber \label{traditionl_CL_0}
% \end{align}
% \end{small}

% where $Y$ is the number of $(s,t)$ pair. $\tau$ is the temperature hyper-parameter. Larger  $\tau$ will decrease the dot-products, creating a more different comparison.

Note that the embedding of all the paths in $\mathcal G$ is calculated as follows. Specifically,
% To supervise the training of the Condensed Graph Encoder and make sure the path information is comprehensive. 
%%%%%%
all the paths between the $s$ and $t$ in the transition graph $\mathcal{G}$ constitute the initial paths set $\mathcal{P}$. For each textual path $T(\pi)$ in $\mathcal{P}$, its representation is learned from a pre-trained path encoder $f(\cdot)$,  which can be denoted as:
\begin{align*}
	&h_{T(\pi)}= f(T(\pi)), \quad h_{\mathcal{G}}= \frac{1}{|\mathcal{P}|} \sum\nolimits^{|\mathcal{P}|}_{\pi} h_{T(\pi)}
\end{align*}
where the representation of $\mathcal{G}$ (i.e., $h_{\mathcal{G}}$) is calculated by a mean function across all sample paths. 

% In this paper, we choose  LLaMA 2 as  $f(\cdot)$.

\subsection{Soft Instruction Tuning}
\label{sec:instruction_tuning}
The embedding calculated by our condensed transition graph encoder can be used flexibly for downstream tasks. One important usage of it is a soft prompt into LLMs. To see this, here we introduce a soft instruction tuning method.
%%%
In particular, we put the aligned condensed graph embedding $\bar h_{\mathcal G}$, calculated through Graph-Centric Contrastive Learning, in the front of the original instruction $\mathcal{I}$ (\textit{The instruction to guide the LLM for the link prediction task}) and input sentence $\mathcal{S}$ (\textit{what is the relationship between $s$ and $t$ ?}). The $\bar h_{\mathcal G}$ serves as a soft prompt in the instruction.
%%%%%%
In practice, we add several special tokens (e.g., [S]) to the beginning of the instruction, forming a sequence of length $l$, such as ([S] [S] [S]), where $l=3$, and then map the representation of $\bar h_{\mathcal G}$ into these token embeddings.
%%%%%%%
the concatenation of $\bar h_{\mathcal G}$, $\mathcal{\mathbf{I}}$ and $\mathcal{\mathbf{S}}$ are fed into LLM to predict the relationship between $s$ and $t$. $\mathcal{\mathbf{I}}$ and $\mathcal{\mathbf{S}}$  are the embedding of $\mathcal{I}$ and $\mathcal{S}$ separately. In the training progress, the LLM parameter is frozen. The overall objective for our proposed CTLP is the combination of the contrastive loss $CL$ (elaborated in Section~\ref{con:method_CL_part}) and cross-entropy loss $CE$:
\begin{align}
&L_{CE}=-\sum\nolimits_{i=1}^{k}log(\mathbf{y}_i|\mathbf{y}_{1:i-1},\bar h_{\mathcal G},\mathcal{I},\mathcal{S})  \nonumber\\ 
&L_{overall}=L_{CE} + L_{CL} \nonumber
\end{align} \label{eq:loss}
where, $\mathbf{y}_i$ is the generated token in the output relationship.

 \section{Experiments}
% We provide experiment
\subsection{Dataset}

In our experiments, we use three public KG benchmarks FB60K-NYT10~\citep{fu2019collaborative}, UMLS~\citep{kok2007statistical}, and NELL~\citep{das2017go} to evaluate the capability of the proposed CTLP.
FB60K-NYT10~\citep{fu2019collaborative} dataset includes the FB-60K knowledge graph and the NYT10 corpus. To further correct the data-building error, triples (head entity, relation, tail entity) are excluded if either the head entity or the tail entity is not found within the provided knowledge graph. UMLS~\citep{kok2007statistical} contains triples from the Unified Medical Language System~\citep{lindberg1993unified}, providing knowledge in the domain of healthcare and medicine. The NELL dataset, as presented by ~\citep{das2017go}, consists of individual graphs corresponding to each query relation, sourced from the web. The statistic information is shown in Table~\ref{con:LP data}.

\begin{table}[ht]
	\centering
	\caption{ Datasets Statistics, column 5 refers to the number of triples in the different set.}
	\renewcommand\arraystretch{1.3}
	\scalebox{0.7}{
	\begin{tabular} {ccccc}
		\hline 
		
		\hline	
		Dataset& $\mathcal{V}$ & $\mathcal{R}$ & \# Triples &   \# Train/ \# Dev/ \# Test\\ 
		\hline		
		FB60K-NYT10&9,840  &  56 &  13,837 & 12,104/\,--/\,1,733\\
		UMLS& 135   &46  &6,529   &5,216/\,652/\,661\\
            NELL& 9,093&12   & 12,108  &8,747/\,543/\,2,818\\
		\hline
	\end{tabular}}
		
	\label{con:LP data}
\end{table}

\subsection{Baselines and Evaluation Metrics}
In our experiments, the baselines include three commonly used KG embedding methods: TransE~\citep{bordes2013translating}, DisMult~\citep{yang2014embedding}  and ComplEx~\citep{trouillon2016complex}.
Obviously, these original models cannot predict the unseen relationship in the zero-shot setting. Therefore, based on these three methods, we propose three zero-shot baselines, \textbf{1) ZSL-TransE}, \textbf{2) ZSL-DistMult}, and \textbf{3) ZSL-ComplEx}. 
Specifically, the pre-trained BERT model~\citep{reimers2019sentence} is employed to compute embeddings for each entity and relation. Following this, the score for each candidate relation is determined using the scoring functions within TransE, DistMult, and ComplEx. \textbf{4) ZS-BERT}~\citep{chen2021zs},   it is designed specifically for zero-shot text classification, we adapt it to our zero-shot setting by calculating the similarity between the sentence \textit{what is the relationship between head entity and tail entity?} and the candidate relations.
%%%%%%%%%%%
Otherwise, We also compare the performance of \textsc{CTLP} with several strong baselines based on the LLM, including   \textbf{5) GPT-3.5/4}:  In GPT-3.5/4, we design hard prompts to guide the GPT models in predicting relations between the head and tail entities. \textbf{6) ZSL-InstructGLM}~\citep{ye2023natural}: we employ the same prompt construction method as InstructGLM~\citep{ye2023natural} to predict the relationship between two entities in zero-shot settings. We consider \textbf{7) LLAMA family} as the baselines: LLaMA2-(13, 70b)~\citep{touvron2023llama}.  
We report the standard Micro Precision, Recall,
and F1-score on the test set.

\begin{table*}[ht]
	\centering
 \caption{Results of various approaches for zero-shot link prediction on three open datasets. 
}
	\renewcommand\arraystretch{1.3}
\resizebox{0.8\textwidth}{!}{%
	\begin{tabular} {l|ccc|ccc|ccc}
		\toprule 
  
		\multicolumn{1}{c}  {}&\multicolumn{3}{c}  { FB60K-NYT10}& \multicolumn{3}{c}  {UMLS}& \multicolumn{3}{c}  {NELL}\\
		%\hline
		 Approach & Precision &  Recall & F1 &  Precision &  Recall & F1 &  Precision &  Recall & F1 \\
            \hline
            ZSL-TransE &7.27     & 7.27    & 7.27   &  3.78   &  3.78 &3.78    & 8.94 &  8.94 &  8.94  \\
            ZSL-ComplEx&  1.21   &  1.21  & 1.21  &  1.81   &  1.81  & 1.81    & 9.44 & 9.44  & 9.44   \\
            ZSL-DistMult &  7.09    &  7.09   &   7.09   & 4.08    & 4.08  & 4.08  & 7.94 &  7.94  &  7.94   \\
            ZS-BERT~\citep{chen2021zs} & 3.52    & 3.52    & 3.52   &  2.11   &  2.11  &   2.11  & 0.18 & 0.18  &0.18    \\
            ZSL-InstructGLM~\citep{ye2023natural} & 0.52&   0.52  &0.52    &   0.00    &   0.00 &0.00     &0.19  &  0.19  &  0.19    \\
            GPT-3.5  &  20.00   &   20.00    &   20.00    & 1.98   &    1.98  &   1.98   & 35.29  &  36.00  & 35.64   \\
            GPT-4  &    12.00 &   12.00  &  12.00&  6.00  &  6.00  &  6.00 & 39.00&39.00&   39.00 \\
            \hline
            LLaMA2 13b  & 36.18   &  36.18     &  36.18   &  9.83  & 9.83     &  9.83     &  38.18&   38.18  &    38.18   \\
            LLaMA2 13b + \textsc{CTLP}  &  \textbf{43.05}    & \textbf{43.05}       &\textbf{43.05}     &10.30 & 10.30    &  10.30    & 39.32  & 39.32     &    39.32  \\
            LLaMA2 70b  & 36.75  &  36.75      &   36.75   &   13.76 &  13.76 &      13.76 &  55.28 & 55.28  &55.28    \\
            LLaMA2 70b + \textsc{CTLP}  &  37.97   & 37.97      &37.97    &   \textbf{13.92}&   \textbf{13.92}  & \textbf{13.92}     & \textbf{56.78}  &  \textbf{56.78}    & \textbf{56.78}    \\
           \bottomrule
	\end{tabular}
 }
\vspace{-2mm}

\label{con:Model main performance}
\vspace{-5mm}
\end{table*}

\subsection{Implementation Detail}
To ensure the zero-shot setting, in the testing progress of dataset FB60K-NYT10, and UMLS,  the condensed graph encoder is trained in the knowledge graph NELL, and the LLM is frozen in the training progress of the condensed graph encoder. Following this, the trained condensed graph encoder is employed to encode the condensed transition graph in  FB60K-NYT10 and UMLS. The condensed graph encoder in the NELL dataset is trained using the FB60K-NYT10.  We set the hops $k=4$ in the transition graph of  FB60K-NYT10, UMLS, and NELL.   During the training process of the condensed graph encoder, the vector dimensions of the transition graph and the  LLM are the same.  In the dataset FB60K-NYT10, the length of the soft prompt token is set to $l=5$, while in the dataset UMLS and NELL, the length of the soft prompt token is set to $l=10$. The relation types of these three data sets do not overlap with each other.

\subsection{Main Results} 
Table~\ref{con:Model main performance} presents the experiment results of various approaches based on Precision, Recall and F1. 
% We also provide individual results for the head entity, tail entity, and relation, based on the output of different models.
We have the following observations: (1) our \textsc{CTLP} significantly outperforms all the strong baselines across all evaluation metrics. (2)  In the dataset FB60K-NYT10, we observe that  \textsc{CTLP} improve the original  LLaMA2 13B,  LLaMA2 70B  by 6.87\%, 1.2\%   respectively, in terms of F1. 
% By incorporating \textsc{CTLP} into LLaMA2 13B, we observed improved performance compared to the original LLaMA2 70B.  It also verifies the effectiveness of our model. During the generation process, our method only consumes approximately 6,000 MB of memory, whereas LLaMA2 70B requires around 40 GB of memory. 
(3) \textsc{CTLP} does not significantly enhance the performance of LLM in UMLS. This can be attributed to the ongoing challenges faced by large language models (LLMs) in understanding path information within the biomedical domain, particularly due to the complexity of relations and entity names, such as abbreviations. (4) We observe that ZSL-InstructGLM gets zero performance in the UMLS, the reason is that ZSL-InstructGLM injects the hop3 information around the head and tail entity to the LLM, and this hard prompt design exceeds the maximum input length of LLM. (5) GPT-3.5/4 exhibits the lowest performance.  The main reason is that the reported results are in the zero-shot setting due to the unavailability of open resources.

\section{Analysis}
In order to further explore the effectiveness of our framework, we perform a series of analyses based on different characteristics of our model. 
First, we evaluate the effectiveness of our proposed CTLP with the models that feed all paths from the transition graph to the LLM.
% %%%%
The contribution of our model components can also be learned from ablated models. So we propose two model variants to help us validate the advantages of the contrative learning operation and condensed graph encoder.  Next, we explore the performance of our model with a different number of hops $k$ in the transition graph, and we also explore the model performance with different soft prompt token lengths $l$. Lastly, to further investigate the effectiveness of our model, we explore the model performance by using the limited dataset to train our graph encoder. 
Due to space constraints, please refer to Appendix~\ref{Impact_of_prompt_length} and Appendix~\ref{New_setting} for the parameter exploration and evaluation of model performance using the limited dataset.

% % utilize GramTransformer to calculate the n-gram graph information further. 
% Next, we explore the performance of our model with a different number of hops $k$ in the transition graph, and we also explore the model performance with different soft prompt token lengths $l$.
% %%%
% Lastly, to further investigate the effectiveness of our model, we  explore the model performance by using the
% limited dataset to train our graph encoder.

\subsection{Comparison with All Paths Input}
\label{Comparison with All Paths Input}
\begin{table}[ht]
	\centering
 \caption{F1 Results of  \textsc{ CTLP}, \textsc{PathLLM-hop-k} and  \textsc{CPathLLM-hop-k}
 }
	\renewcommand\arraystretch{1.3}
\resizebox{0.45\textwidth}{!}{%
	\begin{tabular} {l|l|c|c|c}
		\toprule 
		% \multicolumn{1}{c}  {}&\multicolumn{1}{c}  { FB60K-NYT10}& \multicolumn{1}{1}  {UMLS}& \multicolumn{1}{1}  {NELL}\\
		%\hline
		& Approach & FB60K-NYT10 & UMLS& NELL \\
            \hline
            \multirow{5}*{LLaMA2 13b}& \textsc{PathLLM-hop-3}& 21.29  &    0.00  &  12.67     \\
                                    & \textsc{PathLLM-hop-4} & 4.22   & 0.00    & 3.10     \\
                                     & \textsc{CPathLLM-hop-3} & 21.29     &   0.00   &   12.67     \\
                                    & \textsc{CPathLLM-hop-4} & 4.33    & 0.00    &   3.21   \\
                                     & \textsc{CTLP }&\textbf{ 43.05}   &\textbf{ 10.30}    &\textbf{ 39.32  }  \\
             \hline
           \multirow{5}*{LLaMA2 70b}& \textsc{PathLLM-hop-3} & 30.92   &  0.00    &  18.29     \\
                                    & \textsc{PathLLM-hop-4} & 6.38    &  0.00     &6.39  \\
                                    & \textsc{CPathLLM-hop-3} &   30.92   &   0.00    &  18.29     \\
                                    & \textsc{CPathLLM-hop-4} & 6.26    &    0.00 & 6.42     \\
                                    &\textsc{CTLP} & \textbf{37.97}   &  \textbf{13.92}    &\textbf{56.78}  \\
           \bottomrule
	\end{tabular}
 }
\vspace{-2mm}
\label{con: results with PathLLM-hop-n}
\vspace{-1mm}
\end{table}

To further evaluate the effectiveness of our proposed CTLP, we introduce \textsc{PathLLM-hop-k} and  \textsc{CPathLLM-hop-k},  two LLMs that utilize path information to predict the relationship between the head and tail entities. In \textsc{PathLLM-hop-k}, the path information is derived from the transition graph, whereas in \textsc{CPathLLM-hop-k}, the path information is from the condensed transition graph.
%%%%%%
More specifically, we first transfer all paths in the transition graph or condensed transition graph into their textual descriptions. For example, the path $s-r1-v1-r2-t$ is represented by \textit{the relationship between v1 and s is r1, the relationship between v1 and t is r2}. Secondly, these path descriptions are fed into the LLM to predict the relationship between the head entity and the tail entity in the format of the hard prompt.

Table~\ref{con: results with PathLLM-hop-n} shows the performance comparison.  Our model demonstrates superior performance on these datasets, outperforming the \textsc{PathLLM-hop-k} and \textsc{CPathLLM-hop-k}.
%%%%%%%%%
This is because when we provide all path information to the LLM, the prompt length exceeds the maximum input length of LLM, particularly evident in the UMLS dataset. As a result, \textsc{PathLLM-hop-k} achieves zero performance in UMLS.  Otherwise, despite our algorithm could reduce the number of paths, the condensed paths still surpass the maximum input length of the LLM. Consequently, \textsc{CPathLLM-hop-k} also exhibits a worse performance. 
Moreover, inputting excessively long textual descriptions to the LLM can also lead to a reduction in generation time. In contrast, our method employs a soft prompt strategy to alleviate this issue. Additionally, it utilizes contrastive learning to ensure comprehensive path information.  For the comparison of path numbers on the transition graph and condensed transition graph, please check Appendix~\ref{con:Path_Numbers_Comparison}.

\subsection{Ablation Experiments}
\label{Ablation Experiments}
We introduce two ablated models of CTLP: (1) {\bf CTLP-WCL} does not use the pre-computed transition graph information to guide the learning progress of condensed graph encoder; (2) {\bf CTLP-GCN} uses the traditional Graph Convolutional Network (GCN)~\citep{kipf2016semi} to encoder the condensed graph information. 
To ensure a fair comparison, we maintain consistency in the LLM, hop-k, and the length of prefix soft token across all datasets. We find that the performance of CTLP degrades as we remove important model components. Specifically,  both CTLP-WCL and CTLP-GCN perform poorly when compared to CTLP, indicating the importance of using the pre-computed transition graph information to guide the condensed graph encoder, thereby preventing the loss of valuable path information. Otherwise, the lower performance of CTLP-GCN  also indicates that GCN may face challenges in effectively learning comprehensive path information.

\begin{table}[ht]
	\centering
 \caption{\textsc{ CTLP} performance and its ablated model on the F1  score.
 }
	\renewcommand\arraystretch{1.3}
\resizebox{0.3\textwidth}{!}{%
	\begin{tabular} {l|c|c|c}
		\toprule 
		Approach & FB60K-NYT10 & UMLS& NELL \\
            \hline
            \textsc{CTLP-WCL} & 39.12  & 13.46   & 55.93    \\
             \textsc{CTLP-GCN}&   35.49 &   13.61  &  55.78    \\
              \textsc{CTLP }&\textbf{ 43.05}   &\textbf{ 13.92}    &\textbf{ 56.78}  \\
           \bottomrule
	\end{tabular}
 }
\vspace{-2mm}
\label{con: ablated model}
\vspace{-5mm}
\end{table}

% \begin{table}[ht]
% 	\centering
% 	\renewcommand\arraystretch{1.3}
% \resizebox{0.4\textwidth}{!}{%
% 	\begin{tabular} {l|c|c|c}
% 		\toprule 
% 		Approach & FB60K-NYT10 & UMLS& NELL \\
%             \hline
%               \textsc{LLaMA2 13b} & 36.18  & --  & -- \\
%                 \textsc{LLaMA2 70b} &  -- &   13.76& 55.28  \\
%               \textsc{CTLP  mask 50\%}& 36.98  & 13.92   &   \textbf{ 56.85}\\
%                 \textsc{CTLPmask 30\%}  &  42.35 &    \textbf{14.22} &  56.42 \\
%                 \textsc{CTLP}&\textbf{ 43.05}   & 13.92    &56.78  \\
%            \bottomrule
% 	\end{tabular}
%  }
% \vspace{+2mm}
% \caption{\textsc{ CTLP} performance with different settings. For the FB60K-NYT10 dataset, these models are built upon LLaMA2 13b, whereas for the UMLS and NELL datasets, these models are built upon LLaMA2 70b.
%  }
% \label{con:mask dataset}
% \vspace{-5mm}
% \end{table}

\section{Conclusion}

In this paper, we introduce CTLP, a novel ZSL framework for link prediction. Our approach focuses on leveraging all sample paths between the head and tail entities to predict their relationship. To achieve this, we develop a condensed transition graph construction method and employ contrastive learning to balance time efficiency and comprehensiveness when encoding these paths. Subsequently, the learned condensed transition graph is used as the soft prompt to feed into the LLM.
Experimental results show that our framework achieves consistent improvements over various baselines in three open datasets.

\section{ACKNOWLEDGMENT}
This work was supported by the National Institutes of
Health’s National Center for Complementary and Integrative
Health under grant number R01AT009457, National Institute on Aging under grant number R01AG078154 and
National Cancer Institute under grant number
R01CA287413. The content is solely the responsibility of the
authors and does not represent the official views of the
National Institutes of Health.

This work was supported by the National Science Foundation (NSF) Grant No. 2403312, No. 1841520, No. 2007716, No. 2007976, No. 1942594, No. 1907805, and Amazon Research Award.

% In this study, while constructing the condensed transition graph, paths are generated by determining the shortest path from the starting node of an edge to the head entity, and likewise from the ending node of the edge to the tail entity. Although we have assessed the efficacy of utilizing the shortest path in our research, our future efforts will focus on evaluating sampled paths. 

% We have discovered that our method does not yield a significant improvement in the UMLS dataset. Therefore, we plan to enhance our model by incorporating domain knowledge to further enhance the performance of link prediction.
% \section{Introduction}
% \bibliographystyle{IEEEtran}

\bibliography{IEEEfull}
\bibliographystyle{IEEEtran}
\section{Appendix}
\subsection{Path Numbers Comparison}
\label{con:Path_Numbers_Comparison}

Table~\ref{con:comparison_of_path_numbers} presents a comparison of path numbers between the transition graph and the condensed transition graph. The results demonstrate that our method effectively reduces the path numbers on the transition graph.

% More importantly, retrieving all paths in the transition graph is NP-hard, even with the constraint of limiting the path hops to 4 in the transition graph. Our method addresses this issue efficiently with a time complexity of $O(d*n+m)$.

\begin{table}[ht]
	\centering
 \caption{ A comparison of path numbers within hop 4  in transition graph and condensed transition graph. Each value refers to the average path numbers on the test set across the relevant dataset.
 }
	\renewcommand\arraystretch{1.3}
\resizebox{0.4\textwidth}{!}{%
	\begin{tabular} {l|c|c|c}
		\toprule 
		 & FB60K-NYT10 & UMLS& NELL \\
            \hline
            path numbers in transition graph & 4,689  &   12,617  &   1,220 \\
             path numbers in condensed transition graph & 1,748  &  558   & 524   \\
             \hline
	\end{tabular}
 }
\vspace{+2mm}
\label{con:comparison_of_path_numbers}
\vspace{-5mm}
\end{table}

\subsection{Impact of Hop-$k$ and the Length $l$ of Soft Prompt Tokens 
 }\label{Impact_of_prompt_length}

% \begin{figure}[t]
%         \centering
%         \includegraphics[width=0.5\columnwidth]{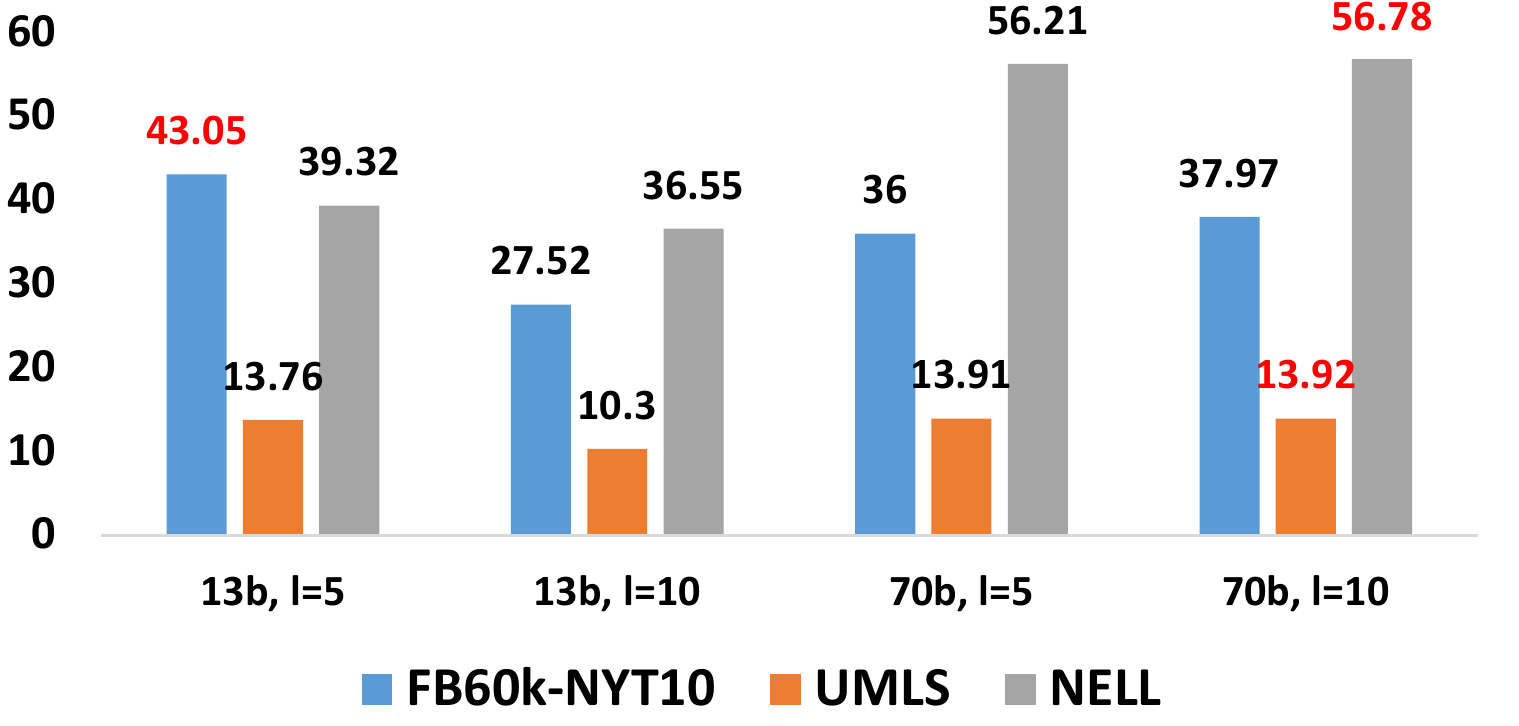}
% 	\caption{\textsc{CTLP} performance with different  prompt token length ($l$) on the F1 score. We fix the hop-$k=4$. 13b denotes the LLaMA2 13b, 70b denotes the LLaMA2 70b. }
% 	\label{con:prompt_token_length}
%  \vspace{-3mm}
% \end{figure}

% \begin{figure}[t]
%         \centering
%         \includegraphics[width=0.5\columnwidth]{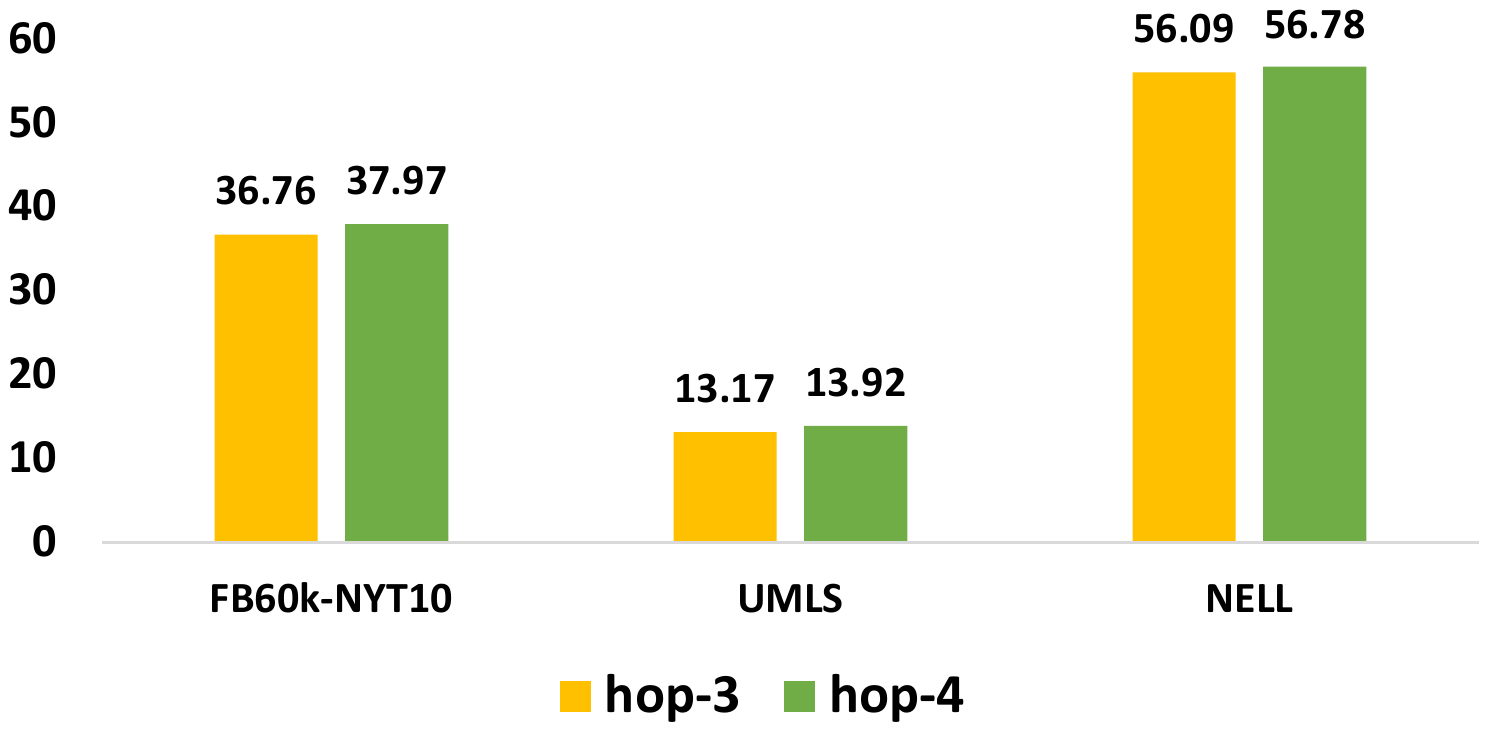}
%         \vspace{-3mm}
% 	\caption{\textsc{CTLP} performance with different  hop-$k$ on the F1 score. We fix the LLM as LLaMA 70b, and the $l=10$.}
%  \vspace{-3mm}
% 	\label{con:hop_number}
% \end{figure}

\begin{figure}[htbp]


    \centering
    \subfigure[]{
        \includegraphics[width=2in]{parameter.pdf}
    }
    \subfigure[]{
	\includegraphics[width=2in]{parameter2.pdf}
    }
    \quad    %用 \quad 来换行
    \caption{(a): \textsc{CTLP} performance with different  prompt token length ($l$) on the F1 score. We fix the hop-$k=4$. 13b denotes the LLaMA2 13b, 70b denotes the LLaMA2 70b. (b):\textsc{CTLP} performance with different  hop-$k$ on the F1 score. We fix the LLM as LLaMA 70b, and the $l=10$. }
    \label{con:parameters}
\end{figure}

In our proposed model, there is one parameter controlling the size of the transition graph, it is hop-$k$. And the other important parameter is the length $l$ of the soft prompt tokens in CTLP, we treat these two parameters with the same importance.  From Figure~\ref{con:parameters}(a), we observed that the model performs better under hop-4 than under hop-3. This indicates that richer path information is beneficial for relationship prediction.  From Figure~\ref{con:parameters}(b), For the UMLS and NELL datasets, our model demonstrates superior performance when $l=10$, whereas its effectiveness is more pronounced on the FB60k-NYT10 dataset when $l=5$.

% From Figure~\ref{con:hop_number}, we found the performance of our model  with the small size LLM (13b) 
% is better than the bigger size LLM (70b) on the dataset FB60K-NYT10.

% For the $n$ parameter, we use the maximum word length about the seen relation set as the gram number. In NELL-ZS, $n=13$, in Wiki-ZS, $n=15$. If the word length is smaller than the maximum word length,  the gram number is its word length. 
\subsection{Train the Condensed Graph Encoder using a Limited Dataset}
\label{New_setting}
% In this section, our primary focus is to investigate the effectiveness of our model under the constraint of utilizing a limited dataset for training the condensed graph encoder. For example, when training the condensed graph encoder on the knowledge graph NELL, we mask  $\%30/50$ of the relation types along with their associated entity pairs in the training dataset. The selection is based on the relation set identified in the test dataset.   After that, the trained 
% condensed graph encoder is used to encode the condensed graph information on the dataset UMLS and FB60k-NYT10. We found that our model consistently outperforms the original LLM even when the mask operation is applied.  

% For the full analysis, please check Appendix~\ref{New_setting}.

% \subsection{Train the Condensed Graph Encoder using a Limited Dataset}
% \label{New_setting}
In our study, we initially trained the condensed graph encoder on the knowledge graph NELL. Subsequently, we employ the trained condensed graph encoder to encode the condensed transition graphs in the UMLS and FB60K-NYT10 datasets. This approach ensures compatibility with the zero-shot setting.
In this section, we mainly focus on investigating the effectiveness of our model when utilizing a constrained dataset for training the condensed graph encoder.  For example, when training the condensed graph encoder on the knowledge graph NELL, we mask  $\%30/50$ of the relation types along with their associated entity pairs in the training dataset. The selection is based on the relation set identified in the test dataset.  They are denoted as \textsc{CTLP  mask 30\%} and   \textsc{CTLP  mask 50\%}.

The results are presented in Table~\ref{con:mask dataset}. It is noteworthy that our model consistently outperforms the original LLM even when the mask operation is applied. The model performance even beyond the \textsc{CTLP}, indicating that our model achieves superior performance with a limited dataset.

\begin{table}[ht]
	\centering
 \caption{\textsc{ CTLP} performance with different settings. For the FB60K-NYT10 dataset, these models are built upon LLaMA2 13b, whereas for the UMLS and NELL datasets, these models are built upon LLaMA2 70b.
 }
	\renewcommand\arraystretch{1.3}
\resizebox{0.4\textwidth}{!}{%
	\begin{tabular} {l|c|c|c}
		\toprule 
		Approach & FB60K-NYT10 & UMLS& NELL \\
            \hline
              \textsc{LLaMA2 13b} & 36.18  & --  & -- \\
                \textsc{LLaMA2 70b} &  -- &   13.76& 55.28  \\
              \textsc{CTLP  mask 50\%}& 36.98  & 13.92   &   \textbf{ 56.85}\\
                \textsc{CTLPmask 30\%}  &  42.35 &    \textbf{14.22} &  56.42 \\
                \textsc{CTLP}&\textbf{ 43.05}   & 13.92    &56.78  \\
           \bottomrule
	\end{tabular}
 }
\vspace{+2mm}
\label{con:mask dataset}
\vspace{-5mm}
\end{table}

\end{document}